\title[Safe RL with Chance-constrained MPC]{Safe Reinforcement Learning with Chance-constrained Model Predictive Control}
\author{%
 \Name{Samuel Pfrommer} \Email{sam.pfrommer@berkeley.edu}\\
 \vspace{-0.5cm}
 %\addr Department of Electrical Engineering and Computer Sciences at UC Berkeley
 \AND
 \Name{Tanmay Gautam} \Email{tgautam23@berkeley.edu}\\
 \vspace{-0.5cm}
 %\addr Department of Electrical Engineering and Computer Sciences at UC Berkeley
 \AND
 \Name{Alec Zhou} \Email{auloehcz@berkeley.edu}\\
 \vspace{-0.51cm}
 %\addr Department of Electrical Engineering and Computer Sciences at UC Berkeley
 \AND
 \Name{Somayeh Sojoudi} \Email{sojoudi@berkeley.edu}\\
 \addr Department of Electrical Engineering and Computer Sciences at UC Berkeley
}
\newcommand{\R}{{\mathbb R}}
\newcommand{\normal}{{\mathcal{N}}}
\newcommand{\KL}{{\mathrm{KL}}}
\DeclareMathOperator*{\E}{\mathbb{E}}
\newcommand{\Tr}{{\mathrm{Tr}}}
\newcommand{\T}{^\intercal}
\newcommand{\ones}[1]{{\mathbf{1}_{#1}}}
\newcommand{\defeq}{\vcentcolon=}
\DeclareMathOperator*{\argmin}{arg\,min}
\newcommandtwoopt{\st}[2][t][]{{s_{#1}^{#2}}}
\newcommandtwoopt{\ac}[2][t][]{{a_{#1}^{#2}}}
\newcommand{\re}{r}
\newcommand{\stprime}{{\st[]'}}
\newcommand{\stspace}{{\mathcal{X}}}
\newcommand{\acspace}{{\mathcal{A}}}
\newcommand{\stsafe}{{\mathcal{S}}}
\newcommand{\stsafeterm}{{\mathcal{S}_T}}
\newcommand{\acset}{{\mathcal{A}}}
\newcommand{\pol}{{\pi_{\theta}}}
\newcommand{\polsafe}{{\pi_{\theta}^{\mathrm{safe}}}}
\newcommand{\polstar}{{\pi_{\theta*}}}
\newcommand{\polsafestar}{{\pi_{\theta^*}^{\mathrm{safe}}}}
\newcommand{\poltarget}{{\pi^*}}
\newcommand{\Pol}{{\Pi}}
\newcommand{\stmu}[1][t]{{\mu_{#1}^{\st[]}}}
\newcommand{\acmu}[1][t]{{\mu_{#1}^{\ac[]}}}
\newcommand{\acsig}[1][t]{{\Sigma_{#1}^{\ac[]}}}
\newcommand{\stsigbar}[1][t]{{\overline{\Sigma}_{#1}^{\st[]}}}
\newcommand{\acsigbar}[1][t]{{\overline{\Sigma}_{#1}^{\ac[]}}}
\newtheorem{assumption}{Assumption}
\begin{document}
\maketitle
\begin{abstract}
Real-world reinforcement learning (RL) problems often demand that agents behave safely by obeying a set of designed constraints. We address the challenge of safe RL by coupling a \textit{safety guide} based on model predictive control (MPC) with a modified policy gradient framework in a linear setting with continuous actions. The guide enforces safe operation of the system by embedding safety requirements as chance constraints in the MPC formulation. The policy gradient training step then includes a safety penalty which trains the base policy to behave safely. We show theoretically that this penalty allows for a provably safe optimal base policy and illustrate our method with a simulated linearized quadrotor experiment.
\end{abstract}
\begin{keywords}
Safe reinforcement learning, model predictive control, chance programming, policy gradient algorithms
\end{keywords}

\section{Introduction}
Reinforcement learning has been extensively studied in the context of closed environments, where it has gained popularity for its success in mastering games such as Atari and Go \citep{Sutton+Barto:1998, Mnih2015HumanlevelCT, silver2017mastering}. A pressing need to deploy autonomous agents in the physical world has introduced a new challenge: agents must be able to interact with their environments in a safe and comprehensible manner. This is especially critical in industrial settings \citep{dalal2018safe}.

For safety-critical tasks, the trial-and-error nature of exploration in RL often prevents agent deployment in the real world during training, motivating the use of simulators. However, when dealing with complex environments, simulators may fail to sufficiently model the complexity of the environment \citep{Achiam2019BenchmarkingSE}. Furthermore, reward functions may be unknown a priori, making learning in simulation impossible. This is where methods that guarantee safe exploration during training offer a substantial advantage.

Our work employs policy gradients and model predictive control (MPC) as its primary building blocks to address the safe RL problem. Policy gradient methods learn a parameterized policy to maximize long-term expected rewards using gradient ascent and play a central role in reinforcement learning due to their ability to handle stochasticity, superior convergence properties and training stability, and efficacy in high-dimensional action spaces \citep{Sutton+Barto:1998}. This family of algorithms is also \textit{model-free}, relying solely on reward signals from the environment without modeling any dynamics. Policy gradient variations have since proliferated under the deep learning paradigm, notably including ``natural'' policy gradients and actor-critic methods in addition to techniques such as experience replay and importance sampling for better sample efficiency \citep{PETERS2008682, wang2017sample}.

Model predictive control is a flexible optimal control framework that has seen successes across a wide variety of settings, including process control in chemical plants and oil refineries, power electronics and power system balancing, autonomous vehicles and drones, and building control \citep{qin2003survey, rawlings2009model}. It is \textit{model-based}, requiring the system dynamics to be identified either a priori or through learning \citep{koller2018learningbased}. Its interpretability lends itself to robust extensions, where system uncertainties and disturbances can be incorporated to probabilistically guarantee agent safety \citep{koller2018learningbased}. 
\subsection{Related Work}
    Safety filters are the closest line of work to our proposed algorithm \citep{wabersich2019linear,wabersich2021predictive}. This is a decoupled method that takes sampled actions from any base policy and uses an MPC controller as the ``safety filter'' to correct unsafe behaviors. However, these two components function independently, which may lead to conflicting and potentially oscillatory behaviour between the MPC and RL objectives. The computationally taxing safety filter must also be used at both training and test times, making the technique ill-suited for real-world deployment on constrained hardware. \cite{end-to-endsaferl} proposes a related framework that combines model-free RL algorithms with control barrier functions to guarantee safety during training. While this approach accommodates model uncertainty and learns the dynamics online, it is decoupled in a manner similar to safety filters and retains the same drawbacks. \cite{Wagener2021SafeRL} describes SAILR - an alternative intervention-based approach that utilizes advantage functions to learn a safe policy during training. While empirically the authors demonstrate that this algorithm outperforms other safe RL methods, it is still shown to occasionally violate safety constraints during training. 
    
    Constrained reinforcement learning (CRL) aims to formalize the reliability and safety requirements of an agent by encoding these explicitly as constraints within the RL optimization problem. \cite{DBLP:journals/corr/AchiamHTA17} proposes a trust-region based policy search algorithm for CRL with guarantees, under some policy regularity assumptions, that the policy stays within the constraints in expectation. This approach cannot be used in applications where safety must be ensured at all visited states. \cite{dalal2018safe} addresses the CRL problem by adding a safety
    layer to the policy that analytically solves an action correction formulation for each state. While this approach guarantees constraint satisfaction, it does not yield a safe policy at the end of training. In \cite{tessler2018reward}, the constraints are embedded as a penalty signal into the reward function, guiding the policy towards a constraint satisfying solution. Similar to \cite{DBLP:journals/corr/AchiamHTA17}, safety is not ensured at each state. 
    
    %More concretely, CRL aims to find an optimal policy that maximizes the cumulative expected reward signal while simultaneously satisfying a set of constraints. There are generally two approaches: 1) modifying optimality criteria by adding a risk factor (e.g. constraints on value, cost, or reward functions) and 2) adding problem-specific knowledge to the exploration process. Both of these approaches often require domain-specific engineering knowledge to design the correct functions and constraints. 
    Model-based RL methods generally offer higher sample efficiency than their model-free counterparts and can be applied in safety-critical settings with more interpretable safety constraints. This area of work includes learning-based robust MPC \citep{koller2018learningbased}. \cite{berkenkamp2017safe} proposes an algorithm that considers safety in terms of Lyapunov stability guarantees. More specifically, the approach demonstrates how, starting from an initial safe policy, the safe region of attraction can be expanded by collecting data within the safe region and adapting the policy. 
    
    %Other techniques include performing adaptive learning of system dynamics to adjust the conservativeness of the control strategy \citep{7039601}, using reachability analysis to construct safe regions in the state space \citep {7039601}, and ensuring stability guarantees by estimating safe regions of attraction 
    Imitation learning attempts to learn a policy by direct supervision from expert demonstration. This approach is frequently plagued by distribution mismatch and compounding errors. Dataset Aggregation (DAgger) is an iterative method used to mitigate these drawbacks by reducing the distribution mismatch \citep{journals/jmlr/RossGB11}. In \citep{Menda2019EnsembleDAggerAB}, the authors extend DAgger to EnsembleDAgger, which addresses the challenge of safe exploration by quantifying the confidence of the learned policy. It does this by using an ensemble of neural networks to estimate the variance of the action proposed by the learned policy at a particular state. While showing solid empirical performance, EnsembleDAgger lacks formal safety guarantees.
\subsection{Paper Contributions}
Our approach wraps a policy gradient \textit{base policy} with an MPC-based \textit{safety guide} that corrects any potentially unsafe actions. The base policy learns to optimize the agent's long-term behaviour, while the MPC component accounts for state-space safety constraints. By optimizing over an action distribution in the safety guide, we show that adding a safety penalty to the policy gradient loss allows for a provably safe optimal base policy. This resolves tension between the base policy and the safety guide and permits the removal of the computationally expensive safety guide after training.
\section{Background}
\subsection{Notation}
Throughout this work, we let $\st \in \stspace$, $\ac \in \acspace$, and $\re(\st, \ac ) \in \R$ refer to the state, action, and reward at time $t$. A sequence of states and actions is termed a trajectory and denoted by $\tau$, and the sum of rewards over a trajectory is denoted $\re(\tau)$. We focus on the setting where $\stspace \subseteq \R^n$ and $\acspace \subseteq \R^m$. Since our action space is continuous, we represent a stochastic policy as $\pi : \stspace \rightarrow \normal(\acspace)$, where $\normal(\acspace)$ is a Gaussian distribution over actions. More specifically, we can write $\pi(~ \cdot \mid \st[]) = \normal(\mu(\st[]), \Sigma(\st[]))$ for some Gaussian mean $\mu(\st[])$ and covariance $\Sigma(\st[])$. The space of such policies is denoted as $\Pol$. When such a policy is parameterized by a vector $\theta$, we use the notation $\pol$. With some abuse of notation, we write $\tau \sim \pi$ to denote sampling a trajectory from the policy $\pi$; similarly, $(\st[], \ac[]) \sim \pi$ denotes sampling a state $\st[]$ from the stationary distribution induced by $\pi$ and then sampling $\ac[]$ from $\pi(~ \cdot \mid \st[])$. Furthermore, $\|\cdot\|_p$ denotes the $\ell_p$-norm within $\R^n$. The symbol $\ones{n}$ defines an $n$-dimensional column vector of ones, and $\Tr(A)$ denotes the trace of the matrix $A$. $\E_{p(x)}[\cdot]$ is the expectation operator with respect to the probability distribution $p(x)$.

\subsection{Policy Gradient}
Policy gradient methods attempt to find the optimal parameters $\theta^*$ for the objective
\begin{align}\label{eq:RLoptimization}
    \max_{\theta} J(\pol), \quad ~ \quad ~ \quad J(\pol) = \E_{\tau \sim \pol}\bigg[\sum_{t=0}^M \gamma^t \re(\st, \ac)\bigg].
\end{align}
The vanilla policy gradient approach performs gradient ascent to maximize this objective \citep{Williams:92}. The gradient can be approximated with the Monte-Carlo estimator 
\begin{align}\label{eq:vanillapolicygrad}
    \nabla_{\theta} J(\pol) \approx \frac{1}{N}\sum_{j=1}^N\sum_{t=0}^M \nabla_{\theta}\log \pol(\ac[t][j] \mid \st[t][j]) \sum_{t'=t}^M \gamma^{t' - t} \re(\st[t'][j], \ac[t'][j]),
\end{align}
with $0 \leq \gamma < 1$ a discount factor. While many variance-reduction techniques can be used to improve \eqref{eq:vanillapolicygrad}, for simplicity of exposition we employ this basic formulation.
    
% To improve the gradient estimate (\ref{eq:vanillapolicygrad}), various variance reduction techniques have been proposed. These include causality arguments and advantage function estimation as considered in actor-critic methods \cite{10.5555/3009657.3009806, mnih2016asynchronous, schulman2018highdimensional, gu2017qprop}. Off-policy methods leverage the notion of importance sampling to allow the gradient to be estimated with samples drawn from a different (e.g. past) policy. This has the added benefit of improving the sample efficiency as compared to the vanilla on-policy approach. Trust-region and proximal policy gradient methods overcome the challenge of choosing the step size within the vanilla policy gradient method by constraining the distribution shift between the current and past policies \cite{schulman2017trust, schulman2017proximal}.

\subsection{Model Predictive Control}
Model predictive control is a purely optimization-based planning framework. Given a dynamics model and a set of state and action constraints (safety requirements, physical limitations, etc.), the finite-horizon MPC problem computes the near-optimal open-loop action sequence that minimizes a specified cumulative cost function. The first of these actions is executed, and the entire optimization repeats on the next time step. While the MPC framework offers concreteness in its constraints, it requires a pre-specified reward function and is incapable of forming reward-maximizing plans beyond its horizon.
%The bottleneck lies in the computational tractability. Incorporating more complex and nonconvex constraints exerts an increasing burden on the MPC solver. Moreover, MPC assumes that a priori we have full knowledge of the underlying system dynamics. However, learning-based MPC is an extension that alleviates this assumption by using a data-driven approach to learn the dynamics \cite{doi:10.1146/annurev-control-090419-075625}.
\subsection{Problem Setting}
% In this work, we address the problem of safe RL by learning a safe policy through a coupled MPC-based safety guide and proximal policy gradient approach. More specifically, we use a modified proximal policy gradient formulation to learn a policy $\pol$ that converges asymptotically towards the safe policy as dictated by the safety guide. 
We represent the environment dynamics as a known linear time-invariant system
\begin{align}\label{eq:lineardynamics}
\st[t+1] = A \st + B \ac,
\end{align}
with initial state $\st[0]$, dynamics matrix $A\in\R^{n\times n}$, and input matrix $B\in\R^{n\times m}$. The safety requirements are captured by a polyhedral state safe set $\stsafe \subset \stspace$. The goal is to learn a policy which maximizes the cumulative reward signal $\re$ while ensuring that the exploration during training is safe at all times, i.e. $\st\in\stsafe$ for all $t$.

\section{Method}

% The MPC safety guide plays the role of ensuring safety at every time step by potentially modifying the RL proposed action distribution $\pol(\cdot | \st)$ to one that enables a safe trajectory with high probability, i.e. $\polsafe(\cdot | \st)$. A safe action $\ac\sim \polsafe(\cdot | \st)$ is used to drive the environment. In its next update, the RL controller uses $\polsafe$ to update its policy parameters to iteratively match the safety guide's recommendation. A high-level illustration of the approach is presented in Figure \ref{fig:approachschematic}.

% We note that similar to \cite{wabersich2021predictive}, this approach has turned a constrained problem into an unconstrained one where the coupling of the safety guide and the environment dynamics, by means of sampling a safe action, can be viewed as a safe system. Here the safety guide accounts for the safety requirements. Different from \cite{wabersich2021predictive}, while our RL proximal policy update acts directly with the safe system, it exploits additional information on updated safe policy $\polsafe$ as recommended by the safety guide. 

A high-level overview of our method combining policy gradient learning and model predictive control is displayed in Figure~\ref{fig:approachschematic}. We first outline the construction of the \textit{safety guide}, which solves a chance-constrained MPC optimization to enforce the safety of actions proposed by the underlying base policy. This allows for guaranteed safety during training time with arbitrarily high probability. Section~\ref{sec:policy} discusses how the safety guide is incorporated into the overarching policy optimization.

\subsection{Safety Guide Design}
\label{sec:guide}
The safety guide solves a convex MPC problem for each time step during training to ensure system safety. This safety guide is not needed later at test time, which is justified theoretically in Section~\ref{sec:analysis}. We begin by making the following assumption.
\begin{assumption} \label{ass:stable}
    There exists a polyhedral terminal safe set $\stsafeterm \subset \stsafe \subset \stspace$ that is \textup{invariant}, meaning that for any state $\st[] \in \stsafeterm$, there exists a sequence of control inputs that keep the system in $\stsafeterm$ for all subsequent time steps.
\end{assumption}
The construction of invariant sets has a well-established theory due to its applications in systems and control. For linear systems, several recursive algorithms have been proposed to construct polyhedral invariant sets \citep{83532, 1470058}, with nonlinear systems considered in \cite{7084969, korda2013convex}.

\begin{figure*}
%\captionsetup{format=plain}
  \centering
  \begin{minipage}[t]{0.4\textwidth}
    \centering
    % \raisebox{-\height}{\includegraphics[width=1\textwidth]{}}
    \raisebox{-\height}{\resizebox{0.9\textwidth}{0.7\height}{%\tikzstyle{block} = [draw, fill=white!20, rectangle, 
 %   minimum height=3em, minimum width=6em]
% The block diagram code is probably more verbose than necessary
\begin{tikzpicture}[auto, node distance=4cm,>=latex']
    \tikzstyle{block} = [draw, fill=white!20, rectangle, 
    minimum height=2em, minimum width=6em]
    \node[block](base){Base Policy};
    \node[block, below of=base, yshift=1cm](safe){Safety Guide};
    \node[block, below of=safe, yshift=1cm](env){Environment};
    \node[block, right of=env](obj){$J^p(\pol)$};
    \node[block, right of=safe](pen){Safety Penalty};
    \draw [draw,->] (base) -- node (t2){$\pi_{\theta}(\cdot \vert s_t)$} (safe);
    % you fixing the line or?
    % Can you do it lol ok
    % Also the figure needs to be a little more compact vertically so the entire vertical size isn't longer than the algorithm I think there's a way to make vertical and horizontal distances different ok I'll try to fix it lol but its controlled by the distance in the arguments of the bgin I see
    \draw[draw, ->] (env)--([shift={(-10mm,0mm)}]env.west)|- node[above right] {$s_{t+1}$} ([shift={(-10mm,0mm)}]base.west)-- (base);
    \draw [draw,->] (safe) -- node (t1){$\pi_{\theta}^{\textrm{safe}}(\cdot \vert s_t)$} (env);
    %\draw [draw,->] |-([shift={(0mm,-5mm)}]safe.south)  -| ([shift={(5mm,-5mm)}]safe.south east) |-
    %([shift={(5mm,-2mm)}]safe.south east)
    %|-
    %([shift={(0mm,-1.5mm)}]pen.west);
    \draw [draw,->] ([shift={(0mm,5mm)}]safe.north) -| ([shift={(18mm,5mm)}]safe.north) |- ([shift={(0mm,1.55mm)}]pen.west);
    \draw [draw,->] ([shift={(0mm,-5mm)}]safe.south) -| ([shift={(18mm,-5mm)}]safe.south) |- ([shift={(0mm,-1.55mm)}]pen.west);
    %\draw [draw,->] |-([shift={(0mm,5mm)}]safe.north)  --
    %([shift={(5mm,5mm)}]safe.north east)
    %--
    %([shift={(5mm,2mm)}]safe.north east)
    %|-
    %([shift={(0mm,1.5mm)}]pen.west);
    \draw [draw,->] (pen) -- (obj);
    \draw [draw,->] (env) -- node {$r(s_t, a_t)$} (obj);
    \draw [draw,->] (obj) -- ([shift={(5mm, 0mm)}]obj.east) -- ([shift={(45mm, 0mm)}]base.east) -- node {$\nabla J^p(\pol)$} (base);
    
\end{tikzpicture}}}
    \vspace{-0.2cm}%
    \caption{
   The training scheme. The base policy $\pol$ suggests a distribution over actions given $\st$. The safety guide potentially shifts this distribution to ensure safety and outputs the distribution $\polsafe( ~ \cdot \mid \st)$, from which the next action is sampled. The environment reward and a safety penalty on the distance between these two distributions are combined in the objective, whose gradient is approximated using Monte Carlo rollouts.
    }\label{fig:approachschematic} 
  \end{minipage}\hfill
\begin{minipage}[t]{0.58\textwidth}
\vspace{-0.32cm}
\begin{algorithm}[H]
   \caption{Safety guide}
   \label{alg:guide}
\begin{algorithmic}
   \STATE {\bfseries Input:} starting state $\stprime$, base policy mean $\acmu[\theta](\stprime)$ and covariance $\acsig[\theta](\stprime)$
   \STATE {\bfseries Parameters:} Planning horizon $H$, safety tolerance $\epsilon$, system matrices $A$ and $B$, state safe set $\stsafe$, safe terminal set $\stsafeterm$, feasible action set $\acset$
   \vspace{0.2cm}
   \STATE {\bfseries Solve} the convex optimization problem
   \vspace{-0.2cm}
    \begin{alignat*}{2}
        \argmin_{\substack{\acsigbar[0] \\ \acmu[0], \dots, \acmu[H] \\ \stmu[0], \dots, \stmu[H]}} \quad
        &\KL\left(\normal\big(\acmu[0], \acsigbar[0] \acsigbar[0]\T\big) ~ \lVert ~ \normal\big(\acmu[\theta](s'), \acsig[\theta](s')\big) \right) \\[-20pt]
        &\stmu[t+1] = A \stmu + B \acmu, ~ ~ 0 ~ {\leq} ~ t ~ {<} ~ H \\
        &\stsigbar \coloneqq A^t B \acsigbar[0], ~ ~ 0 ~ {\leq} ~ t ~ {<} ~ H \\
        &\Pr\big[\normal(\stmu, \stsigbar \stsigbar\T) \not \in \stsafe\big] < \epsilon, ~ ~ 0 ~ {\leq} ~ t ~ {<} ~ H \\
        &\Pr\big[\normal(\stmu[H], \stsigbar[H] \stsigbar[H]\T) \not \in \stsafeterm \big] < \epsilon, ~ ~ t ~ {=} ~ H \\
        &\acmu \in \acspace, ~ ~ 0 ~ {<} ~ t ~ {\leq} ~ H  \\
        &\stmu[0] = \stprime
    \end{alignat*}
    \STATE {\bfseries If} infeasible {\bfseries then} relax constraints and resolve
    % \IF{problem infeasible}
    %     \STATE Relax chance constraints and resolve
    % \ENDIF
    \STATE {\bfseries Return} 
    $\polsafe( ~ \cdot \mid \stprime) = \normal\big(\acmu[0]^*, \acsigbar[0]^* {\acsigbar[0]^*}\T \big)$
\end{algorithmic}
\end{algorithm}
\end{minipage}
\vspace{-0.3cm}
\end{figure*}

Algorithm~\ref{alg:guide} specifies the safety guide optimization problem. Intuitively, the safety guide attempts to find an action distribution that is as close as possible to that outputted by the base policy, subject to safety constraints. Taking inspiration from techniques in the obstacle avoidance literature \citep{blackmore2011chance}, we formulate this in a chance-constrained model predictive fashion.

\noindent{\sc Variables}. The optimization variables consist of a sequence of state means $\stmu$ and open-loop control actions $\acmu$ over a planning horizon of length $H$, with the first action containing some uncertainty represented by $\acsigbar[0]$. The bar over any $\Sigma$ denotes that this matrix is related to the relevant covariance matrix via $\Sigma = \overline{\Sigma} ~ \overline{\Sigma} \T$. This decomposition allows for subsequent chance constraints to be expressed as closed-form convex constraints. Since we are interested in allowing the base policy to have as much freedom as possible, we avoid the additional conservatism that would result from incorporating uncertainty over future actions and allow these to be chosen deterministically.

\vspace{0.2cm}
\noindent{\sc Objective}. The safety guide objective minimizes the divergence between the base policy action distribution and the distribution of the MPC's first action. If the base policy distribution allows for subsequent actions that maintain safety, the objective vanishes and the returned safe distribution is the original distribution specified by the base policy. $\KL$ divergence is not symmetric; we choose this argument order to make the objective convex in the variables $\acmu[0]$ and $\acsigbar[0]$. To see this, consider the following form for the $\KL$ divergence, dropping references to $s'$ for notational convenience:
\begin{align*}
    &\KL\left(\normal\big(\acmu[0], \acsigbar[0] \acsigbar[0]\T\big) ~ \lVert ~ \normal\big(\acmu[\theta], \acsig[\theta]\big) \right) \\
    = ~ &\log \det \acsig[\theta] - \log \det \acsigbar[0] \acsigbar[0]\T - n 
    + \Tr\big((\acsig[\theta])^{-1} \acsigbar[0] \acsigbar[0] \T \big)
    + (\acmu[\theta] - \acmu[0])\T \big( \acsig[\theta] \big)^{-1} (\acmu[\theta] - \acmu[0]).
\end{align*}
Recall that symbols subscripted by $\theta$ are constants in the optimization, while symbols subscripted by $0$ are optimization variables. Therefore we disregard the constant terms $\log \det \acsig[\theta]$ and $-n$. Convexity of $- \log \det \acsigbar[0] \acsigbar[0]\T$ follows from multiplicative properties of the determinant and concavity of the $\log \det$ operator. For the remaining terms, we assume that $\acsig[\theta]$ is positive definite, a practically satisfied assumption. The fourth term $\Tr\big((\acsig[\theta])^{-1} \acsigbar[0] \acsigbar[0] \T \big)$ can then be rewritten as $\Tr(X X^T)$ with $X = \sqrt{(\acsig[\theta])^{-1}} ~ \acsigbar[0]$, which is a convex function composed with a linear function and is therefore convex. Finally, the last term is a positive definite quadratic form and is therefore convex in $\acmu[0]$.

\vspace{0.2cm}
\noindent{\sc Dynamics}. The state propagation equations follow from known properties of linear transformations of Gaussian random variables \citep{liu2019linear}. Since actions after index $0$ are entirely deterministic, we can express state uncertainty at future time steps $\stsigbar$ directly as linear functions of the initial action uncertainty $\acsigbar[0]$. This parallels results in the chance-constrained path planning literature \citep{blackmore2011chance}.

\vspace{0.2cm}
\noindent{\sc Safety Constraints}. The safety constraints for $\stsafe$ and $\stsafeterm$ can be handled similarly. Consider the chance constraint $\Pr\big[s \not \in \stsafe\big] < \epsilon$, with $s \sim \normal(\stmu, \stsigbar \stsigbar\T)$. Evaluating such a constraint using sampling would require prohibitively many samples for small $\epsilon$ and result in a nonconvex optimization problem. We instead leverage techniques from chance constrained optimization to represent this constraint deterministically. Let the polyhedral safe set be defined by $r$ linear inequalities as
\[
\stsafe = \bigcap_{i=1}^r ~ \{ \st[] \mid u_i \T \st[] \leq v_i \}.
\]
Deriving a tight closed-form expression for a joint constraint over multiple linear inequalities is a nontrivial problem that is typically handled by an approximation scheme \citep{cheng2012second}. We conservatively bound the probability of violating each inequality by $\epsilon / r$, noting that this implies
\[
\Pr ~ [\st[] \not \in S] \leq \sum_{i=1}^r \Pr ~ [u_i \T \st[] > v_i] \leq \sum_{i=1}^r \frac{\epsilon}{r} = \epsilon.
\]
We now aim to derive a closed-form counterpart for $r$ constraints of the form
\begin{align*}
    \Pr [u_i^T \st[] > v_i] \leq \frac{\epsilon}{r}, \quad ~ \quad \st[] \sim \normal(\stmu, \stsigbar \stsigbar\T).
\end{align*}
Since $\st[]$ is normally distributed, this constraint is equivalent to the deterministic constraint
\begin{align*}
    v_i - \stmu \T u_i \geq \Phi^{-1} \left( 1 - \frac{\epsilon}{r} \right) \| \stsigbar u_i \|_2,
\end{align*}
where $\Phi$ is the standard Gaussian CDF \citep{duchilecturenotes}. Each of our $r$ constraints now becomes a second-order cone constraint and can be handled by conventional convex optimization solvers. If the original problem is infeasible, we relax these constraints with slack variables which we linearly penalize in the objective.

\subsection{Policy Gradient with Safety Penalty}
\label{sec:policy}
We now modify the standard policy gradient formulation \eqref{eq:RLoptimization} to include a term penalizing corrections by the safety guide, effectively training the base policy to behave safely. Our objective becomes
\begin{align}\label{eq:rloptimizationppo}
    \max_{\theta} J^p(\pol), \quad ~ \quad ~ \quad J^p(\pol) = \E_{(s, a) \sim \polsafe} \Big[ \re(s, a) - 
    \beta ~ d\big(\polsafe(~ \cdot \mid s), \pol(~ \cdot \mid s) \big)\Big],
\end{align}
where $d$ is a positive definite statistical distance which is continuous in $\st[]$ for $\pol,\polsafe \in \Pol$ and $\beta>0$ is a regularization parameter. For notational convenience, the expectation draws from the stationary state distribution induced by $\polsafe$ and the associated action distribution. We show in Section~\ref{sec:basesafe} that any positive definite, continuous $d$ results in a safe base policy after training. We choose the squared $l_2$ parameter distance for its numerical properties:
\begin{align*}
    d\big(\polsafe(~ \cdot \mid s), \pol(~ \cdot \mid s) \big) \coloneqq
    \| \acmu[\textrm{safe}] - \acmu[\theta] \|_2^2
    + \| \acsig[\textrm{safe}] - \acsig[\theta] \|_2^2.
\end{align*}
We can now obtain our optimal parameters $\theta^*$ using gradient ascent on a Monte Carlo estimator similar to \eqref{eq:vanillapolicygrad} with an added term for the safety penalty.

% Similar to the vanilla policy gradient method, the gradient of the objective of \eqref{eq:rloptimizationppo} can be approximated using a Monte Carlo estimator
% \begin{align}\label{eq:mcgradientppo}
%     J^p(\pol) \approx \frac{1}{M}\sum_{j=0}^M \Big(\nabla_{\theta}\log \pol(\ac[][j] | \st[][j]) \re(\st[][j], \ac[][j]) - \beta \nabla_{\theta}d\big(\polsafe(~ \cdot \mid \st[][j]), \pol(~ \cdot \mid \st[][j]) \Big).
% \end{align}
% with the $M$ samples of $\left(\st[][j], \ac[][j], \re(\st[][j], \ac[][j])\right)$ collected by executing $\polsafe$.

\section{Theoretical Analysis}
\label{sec:analysis}
We show that our policy leads to safe exploration at training time with arbitrarily high probability. We then prove that coupling reward maximization with a safety penalty in \eqref{eq:rloptimizationppo} leads to a safe optimal base policy. This is highly desirable as it eliminates conflict between the base policy and the safety guide, mitigates distributional shift, and reduces the computational burden on the agent at test time.
% We also justify our choice of optimizing over action distributions in our safety guide by considering a variant of the safety filter proposed in \cite{wabersich2021predictive}.

\subsection{Training Time Safety}
Consider a standard episodic training setting where an episode terminates after a set number of time steps or upon violation of the state safety constraints.

\begin{proposition}
    Consider an arbitrary natural number $T$ and safety tolerance $\epsilon > 0$ from Algorithm~\ref{alg:guide}. Then over $T$ training steps, the expected number of states $\st$ such that $\st \not \in \stsafe$ is at most $\epsilon T$.
\end{proposition}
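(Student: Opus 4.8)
The plan is to reduce the claim to a single per-step probability bound, with the chance constraint inside Algorithm~\ref{alg:guide} doing the real work. I would index the $T$ training steps and, for the $t$-th step, let $X_t$ be the indicator of the event that the state reached at that step lies outside $\stsafe$; the total count of unsafe states is then $N = \sum_{t=1}^{T} X_t$. By linearity of expectation, $\E[N] = \sum_{t=1}^{T} \Pr[\st \not\in \stsafe]$, so it suffices to prove that each summand is at most $\epsilon$.

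The key step is to identify the \emph{actual} distribution of the freshly reached state with a distribution that the safety guide has explicitly constrained. At the start of a step the agent occupies a known, deterministic state $\stprime$, solves the convex program, and samples its action from the returned Gaussian $\polsafe(~\cdot \mid \stprime) = \normal(\acmu[0]^*, \acsigbar[0]^* {\acsigbar[0]^*}\T)$. Since $\stprime$ is deterministic and the dynamics \eqref{eq:lineardynamics} are affine in the action, the next state $A \stprime + B \ac[]$ is exactly the Gaussian $\normal(\stmu[1], \stsigbar[1] \stsigbar[1]\T)$ whose probability of leaving $\stsafe$ is held below $\epsilon$ by the first safety chance constraint. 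Conditioned on $\stprime$, the reached state is therefore unsafe with probability strictly less than $\epsilon$.

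Removing the conditioning is immediate: because $\Pr[\st \not\in \stsafe \mid \stprime] < \epsilon$ holds for \emph{every} realization of the (random) current state $\stprime$, the tower property gives $\Pr[\st \not\in \stsafe] = \E_{\stprime}\big[\Pr[\st \not\in \stsafe \mid \stprime]\big] \le \epsilon$. Summing this bound over the $T$ steps yields $\E[N] \le \epsilon T$, which is the claim.

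The substantive obstacle is not the probabilistic bookkeeping but guaranteeing that the hard chance constraint is genuinely active at every visited state. The constraint holds only when the program is feasible; on the infeasible branch Algorithm~\ref{alg:guide} relaxes the constraints, which would invalidate the per-step bound. I would close this gap with Assumption~\ref{ass:stable}: the invariant terminal set $\stsafeterm$ supplies recursive feasibility, so once the system starts in the MPC's feasible region it remains there at each subsequently visited state, and the episodic reset ensures every episode begins feasibly. I would also verify that the covariance actually propagated into the next state coincides with the covariance appearing in the enforced constraint rather than an over- or under-approximation, so that the identification in the second step is exact; this is precisely where the linear-Gaussian propagation $\stsigbar[t] = A^t B \acsigbar[0]$ must be invoked at the correct index.
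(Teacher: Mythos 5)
Your proof is correct and follows essentially the same route as the paper's: the per-step chance constraint bounds the probability of the reached state leaving $\stsafe$ by $\epsilon$, linearity of expectation sums this over the $T$ steps, and Assumption~\ref{ass:stable} supplies the recursive feasibility needed for the hard constraint (rather than its relaxation) to be active at every visited state. Your write-up is in fact more careful than the paper's brief informal argument, which leaves the tower-property step and the feasibility caveat implicit.
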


This follows directly from the constraints on the optimization problem in Algorithm~\ref{alg:guide}. Specifically, there is an $\epsilon$ chance of sampling an action from the safe distribution that leads to an unsafe state, in which case the episode ends in at most $H$ time steps. Assumption~\ref{ass:stable} guarantees that with probability $1 - \epsilon$ the action sampled will be safe and subsequent optimizations will remain feasible.

Since $\epsilon$ is a design parameter, this expectation can be driven to be arbitrarily small, at the cost of imposing additional conservatism in the exploration process. In practice, this quantity can be effectively set to zero by a small concession on the size of the safe sets $\stsafe$ and $\stsafeterm$. Shrinking these by some factor $1 - \delta$ gives the safe policy a buffer to the true unsafe region, allowing it to recover from unsafe actions by softening the chance inequality constraints in Algorithm~\ref{alg:guide}. Our experiments in Section~\ref{sec:experiments} use this technique to maintain \textit{perfect safety} over the course of a million training steps.

\subsection{Base Policy Safety}
\label{sec:basesafe}
In order to derive theoretical guarantees for the optimal policy of \eqref{eq:rloptimizationppo}, we introduce two assumptions.

\begin{assumption}
\label{ass:approx}
    The parameterized base policy class $\pol$ is a \textit{universal approximator}. Namely, for every policy $\poltarget \in \Pol$ and desired $\epsilon$, there exists a parameterized $\pol \in \Pol$ such that
    \begin{align*}
    \sup_{\st[], \ac[]} | \poltarget(\ac[] \mid \st[]) - \pol(\ac[] \mid \st[]) | < \epsilon.
    \end{align*}
\end{assumption}

\begin{assumption}
\label{ass:boundedrewards}
    The reward $\re(\tau)$ is bounded over all trajectories $\tau$.
\end{assumption}

Assumption~\ref{ass:approx} parallels a standard assumption in the deep learning literature that a richly parameterized network is arbitrarily expressive. Assumption~\ref{ass:boundedrewards} is similarly benign, and is immediately satisfied in a typical setting where rewards are bounded and trajectories are finite.

% Optimization oracle is assumption in \cite{agarwal2020model} \cite{brukhim2021boosting}. Justification for assumptions

\begin{lemma}
\label{lem:approx}
    For every $\poltarget \in \Pol$ and $\epsilon_J > 0$, there exists a learned parameterization $\pol$ such that
    \[
        J(\poltarget) - J(\pol) < \epsilon_J,
    \]
    where $J(\pi) = \E_{\tau \sim \pi} \re(\tau)$ is the standard reinforcement learning objective.
\end{lemma}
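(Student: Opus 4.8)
The plan is to bound the gap in expected returns directly by the total-variation distance between the trajectory distributions induced by $\poltarget$ and $\pol$, and then to control that distance using the uniform density-approximation guarantee of Assumption~\ref{ass:approx}. Let $R \defeq \sup_\tau |\re(\tau)|$, which is finite by Assumption~\ref{ass:boundedrewards}. Because the dynamics \eqref{eq:lineardynamics} are deterministic, for the fixed initial state $\st[0]$ every state $\st$ along a trajectory is a deterministic function of the preceding actions, so all trajectory randomness comes from action sampling and the trajectory density induced by any policy $\pi$ factorizes as $p_\pi(\tau) = \prod_{t=0}^{M} \pi(\ac \mid \st)$. Writing $J(\pi) = \int \re(\tau) \, p_\pi(\tau) \, d\tau$, I would first bound
\[
J(\poltarget) - J(\pol) \;\le\; \int |\re(\tau)| \, \big| p_{\poltarget}(\tau) - p_{\pol}(\tau) \big| \, d\tau \;\le\; R \int \big| p_{\poltarget}(\tau) - p_{\pol}(\tau) \big| \, d\tau .
\]

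Next I would expand the difference of the two product densities with the standard telescoping identity
\[
\prod_{t=0}^{M} \poltarget(\ac \mid \st) - \prod_{t=0}^{M} \pol(\ac \mid \st) = \sum_{k=0}^{M} \Big( \prod_{t=0}^{k-1} \pol(\ac \mid \st) \Big) \big( \poltarget(\ac[k] \mid \st[k]) - \pol(\ac[k] \mid \st[k]) \big) \Big( \prod_{t=k+1}^{M} \poltarget(\ac \mid \st) \Big).
\]
Integrating the $k$-th summand over the trajectory variables, the factors with index $t > k$ integrate to one when the actions $\ac[M], \dots, \ac[k+1]$ are integrated out in turn (each is a conditional density whose conditioning state does not depend on the later action), and after the central factor is integrated the factors with index $t < k$ likewise integrate to one. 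This reduces the $k$-th term to $\int |\poltarget(\ac[k] \mid \st[k]) - \pol(\ac[k] \mid \st[k])| \, d\ac[k]$, the $\ell_1$ distance between the two action densities at the realized state $\st[k]$.

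The remaining and most delicate step, which I expect to be the main obstacle, is converting the \emph{pointwise} density bound of Assumption~\ref{ass:approx} into this \emph{integrated} $\ell_1$ bound: a uniform smallness of $|\poltarget - \pol|$ does not by itself control an integral over an unbounded domain. I would resolve this by using that the feasible action set $\acspace$ has finite Lebesgue measure $V \defeq \mathrm{vol}(\acspace)$, so that choosing $\pol$ with $\sup_{\st[], \ac[]} |\poltarget(\ac[] \mid \st[]) - \pol(\ac[] \mid \st[])| < \epsilon$ makes each per-step $\ell_1$ distance at most $V \epsilon$. Summing the $M+1$ telescoping terms then gives $\int | p_{\poltarget} - p_{\pol} | \, d\tau \le (M+1) V \epsilon$, hence $J(\poltarget) - J(\pol) \le R (M+1) V \epsilon$, and selecting the approximation tolerance $\epsilon < \epsilon_J / \big( R (M+1) V \big)$ in Assumption~\ref{ass:approx} yields the claim. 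The care needed is precisely in justifying the finite-measure conversion and in checking that the telescoping integration is legitimate despite the states being coupled to the integrated actions through the deterministic dynamics.
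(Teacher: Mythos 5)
Your proposal is correct and follows the same basic strategy as the paper---bound $J(\poltarget)-J(\pol)$ by the discrepancy between the induced trajectory densities and control that discrepancy via the uniform bound of Assumption~\ref{ass:approx} together with bounded rewards---but your execution is noticeably more careful, and the extra care addresses a real gap in the paper's own argument. The paper writes $\Delta J$ as an integral of $\re(\tau)\,p(\st[0])\,\delta\pi(\tau)\prod_t p(\st[t+1]\mid\st,\ac)$, bounds $\delta\pi(\tau)$ uniformly, and then concludes because ``probability distributions integrate to $1$''; but once $\delta\pi$ is pulled out as a constant, what remains is \emph{not} a probability density over trajectories (the policy factors are gone), and its integral over the action variables is on the order of $\mathrm{vol}(\acspace)^{M+1}$---so the paper's one-line conclusion implicitly relies on exactly the finite-volume condition you isolate. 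Your telescoping decomposition improves on this in two ways: it lets every factor except the single differenced one integrate to $1$ legitimately (since each is a genuine conditional density, integrated innermost-first), and it reduces the constant from $\mathrm{vol}(\acspace)^{M+1}$ to $(M+1)\,\mathrm{vol}(\acspace)$, yielding the explicit bound $R(M+1)V\epsilon$. The one caveat is that the finite-volume hypothesis on $\acspace$ is not stated in the paper (the policies are Gaussians, which naturally live on all of $\R^m$), so strictly speaking both your proof and the paper's need either a bounded feasible action set---which the algorithm's constraint $\acmu\in\acspace$ and the experiments suggest is intended---or a separate argument converting the sup-norm bound between the two Gaussian densities into an $L^1$ bound. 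You flag this honestly; it is the right thing to worry about, and your resolution is the most natural reading of the paper's setup.
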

\begin{proof}
    Let $p(\st[])$ be the initial state distribution and $p(\st[t+1] \mid \st[t], \ac[t])$ represent the environment transition dynamics. For notational simplicity, we define $\Delta J \defeq J(\poltarget) - J(\pol)$. Then we can write
    \begin{align}
        \Delta J = \int \re(\tau) p(\st[0]) \delta \pi(\tau) \prod_{t=0}^M p(\st[t+1] \mid \st[t], \ac[t]) d\tau, \label{eq:deltaJ}
    \end{align}
    where
    \[
        \delta \pi (\tau) = \prod_{t=0}^M \poltarget(\ac \mid \st) - \prod_{t=0}^M \pol(\ac \mid \st). 
    \]
    Assumption~\ref{ass:approx} implies that there exists a parameter vector $\theta$ such that $\delta \pi(\tau)$ can be bounded for all $\tau$ by an arbitrarily small quantity. Since $\re(\tau)$ in \eqref{eq:deltaJ} is bounded by Assumption~\ref{ass:boundedrewards} and probability distributions integrate to $1$, $\Delta J$ can be driven arbitrarily close to zero.
\end{proof}

Lemma~\ref{lem:approx} relates the universal approximation properties from Assumption~\ref{ass:approx} to the reward incurred by the policy. We now proceed with the main theoretical result.

\begin{theorem}
\label{thm:safebase}
    An optimal parameter vector $\theta^*$ which maximizes \eqref{eq:rloptimizationppo} is such that the base policy $\polstar$ is safe; i.e., the equality $\polstar(~ \cdot \mid \st) = \polsafestar(~ \cdot \mid \st)$ holds except on a set of measure zero with respect to the stationary state density function induced by $\polstar$ in a Radon-Nikodym sense.
\end{theorem}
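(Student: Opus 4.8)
The plan is to argue by contradiction, exploiting that the penalty term in $J^p$ is nonnegative. Since $d$ is positive definite, for every $\theta$ we have $J^p(\pol) \le \E_{(s,a)\sim\polsafe}[\re(s,a)]$, with equality precisely when the base policy already agrees with its safe correction on the support of the induced stationary distribution. Writing $R^\star \defeq \E_{(s,a)\sim\polsafestar}[\re(s,a)]$ and $c \defeq \E_{(s,a)\sim\polsafestar}\big[d(\polsafestar(\cdot\mid s), \polstar(\cdot\mid s))\big] \ge 0$, optimality of $\theta^*$ gives $J^p(\polstar) = R^\star - \beta c$. The entire argument reduces to showing $c = 0$.

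The key structural observation is that the safe policy $\polsafestar \in \Pol$ is \emph{self-safe}: the base-policy distribution enters Algorithm~\ref{alg:guide} only through the $\KL$ objective, whereas the feasible set is fixed independently of the target. Hence, feeding $\polsafestar(\cdot\mid s)$ into the guide, the optimal variables of $\polstar$'s own problem remain feasible and drive the $\KL$ objective to its global minimum of $0$, so the guide returns $\polsafestar(\cdot\mid s)$ unchanged. Consequently, if $\polsafestar$ belonged to the parameterized class, using it as the base policy would incur zero penalty and reward exactly $R^\star$, i.e. objective $R^\star \ge R^\star - \beta c$. Since $\polsafestar$ need not lie in the class, I would invoke Assumption~\ref{ass:approx} to choose $\pi_{\theta'}$ approximating $\polsafestar$ uniformly, show that $J^p(\pi_{\theta'}) \to R^\star$, and conclude that if $c>0$ a fine enough approximation yields $J^p(\pi_{\theta'}) > R^\star - \beta c = J^p(\polstar)$, contradicting optimality.

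The step demanding the most care — and the main obstacle — is the limit $J^p(\pi_{\theta'}) \to R^\star$, since both the reward term $\E_{(s,a)\sim\pi_{\theta'}^{\mathrm{safe}}}[\re(s,a)]$ and the penalty $\E[d(\pi_{\theta'}^{\mathrm{safe}}, \pi_{\theta'})]$ depend on the safe correction, forcing me to establish continuity of the guide map $\pi \mapsto \pi^{\mathrm{safe}}$ at $\polsafestar$. Here the target-independence of the feasible set again pays off: combined with the strict convexity of the $\KL$ objective in $(\acmu[0], \acsigbar[0])$ established in the discussion of Algorithm~\ref{alg:guide}, the minimizer is unique and varies continuously with the target (a standard parametric-convex-optimization fact, e.g. via Berge's maximum theorem with a fixed constraint set and a coercive objective), and strict feasibility of $\polsafestar$ ensures the relaxation branch is never triggered nearby. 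Composing this with the reward-continuity machinery of Lemma~\ref{lem:approx} — bounded rewards by Assumption~\ref{ass:boundedrewards}, a finite horizon, and products of transition and policy densities — yields $\pi_{\theta'}^{\mathrm{safe}} \to \polsafestar$ and hence both $\E_{(s,a)\sim\pi_{\theta'}^{\mathrm{safe}}}[\re(s,a)] \to R^\star$ and $d(\pi_{\theta'}^{\mathrm{safe}}, \pi_{\theta'}) \to 0$.

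Having forced $c = 0$, I would close the proof using positive definiteness of $d$: an expectation of the nonnegative integrand $d(\polsafestar(\cdot\mid s), \polstar(\cdot\mid s))$ vanishing implies the integrand is zero for almost every $s$ with respect to the $\polsafestar$-stationary density, and positive definiteness then upgrades this to $\polstar(\cdot\mid s) = \polsafestar(\cdot\mid s)$ almost everywhere. Because the two policies coincide almost everywhere, they induce the same stationary state density, so the theorem's qualifier ``with respect to the stationary density induced by $\polstar$'' matches the $\polsafestar$-density used in the penalty, and the claimed Radon-Nikodym almost-everywhere equality follows.
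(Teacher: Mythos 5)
Your proposal is correct and follows the same overall strategy as the paper's proof: argue by contradiction that a strictly positive expected penalty at $\theta^*$ can be beaten by a parameterized policy approximating $\polsafestar$ (which is a fixed point of the safety guide and hence incurs zero penalty while collecting the same reward $R^\star$), invoking Assumption~\ref{ass:approx} and Lemma~\ref{lem:approx}. Two divergences are worth recording. First, where the paper extracts a compact subset $\bar U$ of the unsafe set via regularity of the stationary measure and lower-bounds $d$ by a uniform $\delta>0$ there, you work directly with $c=\E_{s}[d]$ and close with the elementary fact that a nonnegative integrand with zero integral vanishes almost everywhere; this is leaner and renders the topological machinery (openness of $U$, regularity via Folland) unnecessary, at the cost of the explicit quantitative gap $\beta\delta\mu^*(\bar U)$ the paper exhibits. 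Second, and more valuably, you explicitly isolate the step the paper asserts without justification: driving $J^p(\pi_{\theta'})$ to $J^p(\polsafestar)$ requires continuity of the guide map $\pi\mapsto\pi^{\mathrm{safe}}$, because both the reward term (whose state marginal is induced by $\pi_{\theta'}^{\mathrm{safe}}$ rather than $\pi_{\theta'}$, so Lemma~\ref{lem:approx} does not apply off the shelf) and the penalty depend on the corrected policy. Your appeal to the target-independent feasible set, strict convexity of the $\KL$ objective, and a Berge-type argument is the right way to fill that gap, though a complete writeup should verify that the relaxation branch stays inactive in a neighborhood of $\polsafestar$. The one loose end is your closing remark that the two policies ``induce the same stationary density'': agreement almost everywhere with respect to $\polsafestar$'s stationary measure does transfer stationarity to $\polstar$, but this deserves an explicit sentence --- the paper is equally casual here, silently passing between the stationary measures of $\polstar$ and $\polsafestar$.
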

\begin{proof}
    To prove by contradiction, assume that $\polstar$ is not safe. Define the set of states where the policy diverges from its safe representation as
    \[
        U = \{ \st[] : d(\polsafestar(~ \cdot \mid \st[]), \polstar(~ \cdot \mid \st[])) > 0 \} = \{s : d(s) > 0\}.
    \]
    with some abuse of notation. Now, consider the measure $\mu^*$ induced by the stationary state density function of $\polstar$. Since probability distributions integrate to one, $\mu^*$ is finite on compact sets; Euclidian space is also locally compact Hausdorff and second countable, and hence we have $\mu^*$ regular (Theorem 7.8 in \cite{folland1999real}).
    
    Since $d$ is continuous in $\st[]$ by assumption, $U$ is the inverse image of an open set under a continuous function and is therefore open. Regularity of $\mu^*$ and $\mu^*(U) > 0$ (by assumption) implies that there exists a compact set $\bar U \subset U$ such that $\mu^*(\bar U) > 0$. Since continuous functions attain their minimum over compact sets, we have that $d(s) > \delta$ for all $s \in \bar U$ for some $\delta > 0$.
    
    We now show that the difference in objectives between the safe and base policies is given by
    \[
        J^p(\polsafestar) - J^p(\polstar) \geq \beta \delta \mu^*(\bar U) > 0.
    \]
    Observe that the state action marginal in the expectation \eqref{eq:rloptimizationppo} is always taken with respect to $\polsafestar$; therefore, the reward terms vanish and the safety penalty is the only remaining term. By the previous discussion, this is at least $\delta \mu^*(\bar U)$, providing the desired expression.
    
    Finally, we invoke Lemma~\ref{lem:approx} to construct a policy $\pi_{\theta'} \in \Pol$ such that $J^p(\polsafestar) - J^p(\pi_{\theta'}) < \beta \delta \mu^*(\bar U) / 2$, noting that the safety penalty in \eqref{eq:rloptimizationppo} can be driven arbitrarily close to zero by Assumption~\ref{ass:approx} and continuity of $d$. This implies $J^p(\pi_{\theta'}) > J^p(\polstar)$, which is a contradiction.
\end{proof}

Theorem~\ref{thm:safebase} shows that the optimal parameters $\theta^*$ for our objective \eqref{eq:rloptimizationppo} produce a \textit{safe base policy} $\polstar$. Provided that gradient ascent effectively maximizes \eqref{eq:rloptimizationppo}, we can be confident that the policy has learned to behave safely and no longer requires the safety guide. This has three key advantages.

\begin{enumerate}
    \item \textit{Harmony between the base policy and safety guide.} Without a safety penalty, there is limited incentive for the base policy to learn to correct its own unsafe actions; the executed actions and ensuing rewards are always drawn from the action distribution of the safety guide. As noted in \cite{koller2018learningbased}, this decoupling can lead to a perpetual conflict between the base policy and the safety guide, with the base policy constantly approaching the boundaries of the safe set and the guide constantly correcting. Theorem~\ref{thm:safebase} shows that our method resolves this issue.
    \item \textit{Mitigation of distributional shift.} One potential concern with this method involves distributional shift; our policy gradient step updates the base policy, while rewards are sampled using the safe policy. Theorem~\ref{thm:safebase} implies that as training progresses, the distributional shift between these two policies decays to zero.
    \item \textit{Reduction of computational burden.} Solving the safety guide optimization problem requires significant computational effort. Theorem~\ref{thm:safebase} shows that the safety guide can be removed at test time without compromising safety. This can free up agent resources for other tasks.
\end{enumerate}

We note that in the setting where $J^p(\pol)$ is not completely maximized, the safety penalty $d(s)$ can still be concretely evaluated in any region of the state space. This provides the designer of the system with a quantitative measure of the level of safety of the base policy as well as insights into which regions of the state space are most dangerous.

% \subsection{Enforcing Safety on Sampled Actions}
% We compare our method with a modification of the safety filter approach described in \cite{wabersich2021predictive}. This approach solves an MPC problem similar to Algorithm~\ref{alg:guide}, but by first sampling an action from the base policy and then solving a deterministic optimization. As discussed in Section~\ref{sec:basesafe}, this decoupling can lead to a conflict between the base policy and the safety guide.

% We could instead consider augmenting the overall policy gradient objective with a safety term that penalizes $l_2$ distance between the base policy action mean and the safe policy action. This has two drawbacks. The first drawback is that this approach has no way of penalizing the covariance of the base policy actions, meaning that it is impossible to train the base agent to act more precisely when operating near unsafe regions. More importantly, we emphasize that the theoretical guarantees of Theorem~\ref{thm:safebase} do \textit{not} hold for this setting. This is because $\polsafe$ produces action densities which are no longer Gaussian; in fact, unsafe distributions are largely corrected to a Dirac distribution at the boundary of the set of safe actions. This is no longer in the class of Gaussian-outputting policies $\Pol$ and cannot be represented by a parameterized policy.
\section{Numerical Experiments}
\label{sec:experiments}
\begin{figure}[tbp]
\floatconts
{fig:example}% label for whole figure
{\vspace{-0.85cm}\caption{Experimental setup and results for the quadrotor setting. (a) The $\phi$-$\dot \phi$ plane of the quadrotor system safety sets. The dashed lines represent the true bounds of the terminal safety set $\stsafeterm$; we inner approximate this by a polytope. In practice, we also slightly shrink $\phi_{\min}$ and $\phi_{\max}$ by some factor $1-\delta$. (b) Test-time performances of a policy gradient agent trained with and without the safety guide on the double integrator task. The thick line indicates mean performance over five runs, with the shaded area representing the standard deviation. (c) Test-time average episode length. The policy trained with the safety guide achieves the maximum episode length of $250$ even when the safety guide is removed, indicating that the base policy has learned to behave safely.}\vspace{-0.2cm}}% caption for whole figure
{%
\subfigure[]{
\label{fig:safetyset}% label for this sub-figure
\raisebox{0.48cm}{\resizebox{.3\textwidth}{!}{\begin{tikzpicture}[declare function={
        g(\x)=sqrt(-(\x-2))/2; 
        f(\x)=-sqrt((\x+2))/2; 
        gxmin=-2;gxmax=2;
        fxmin=-2;fxmax=2;}]
    
    \draw [<->, ultra thick] (-3,0) -- (3,0) node [right] {$\phi$};
    \draw [<->, ultra thick] (0,-2.2) -- (0,2.2) node [above] {$\dot \phi$};

    \draw[fill=green!60!black, opacity=0.05]  (-2, -2) -- (-2,2) -- (2,2) -- (2,-2) -- cycle;    
    \draw [black] (-2, -2) -- (-2, 2);
    \draw [black] (2, -2) -- (2, 2);
    
    \node (S) at (-1.5, 1.5) {$\stsafe$};
    
    \node [right] (xmax) at (2, -0.3) {$\phi_{\max}$};
    \node [left] (xmin) at (-2, -0.3) {$\phi_{\min}$};
    
    \draw[fill=green!60!black, fill opacity=0.15]  (-2, 0) -- (-2,1) -- (2,0) -- (2,-1) -- cycle;    
    \node (ST) at (-1.5, 0.5) {$\stsafeterm$};
    \draw[thick, dashed] plot[domain=gxmin:gxmax,samples=51,smooth] (\x,{g(\x)}); 
    \draw[thick, dashed] plot[domain=fxmin:fxmax,samples=51,smooth] (\x,{f(\x)}); 
    
        % \fill[black] (2,0) rectangle (2.5,0.5);
        % \draw [-stealth](2.25,1) -- (3,1) node [left,xshift=-0.8cm] {$\dot x$};
        % \path (4,0.5) node[left]{$x_{\max}$};
        % \path (-3,0.5) node[left]{$x_{\min}$};
        % \begin{scope}[x=3pt,y=3pt,shift={(2.25cm,-1cm)}]
        %   \draw[gray!50,thick] (-12,0) -- (12,0);
        %   \draw[gray!50,thick] (0,-5) -- (0,5);
        %   \draw[thick,green!60!black] plot[domain=gxmin:gxmax,samples=51,smooth] (\x,{g(\x)}) node [left,yshift=15] {$\polsafe$}; 
        %   \draw[thick] plot[domain=fxmin:fxmax,samples=51,smooth] (\x,{f(\x)}) node [right,yshift=15,xshift=-10] {$\pol$}; 
        %     \path (12,-0.25) node[right]{$u$};
        % \end{scope}
\end{tikzpicture}}}
}
\quad % space out the images a bit
\subfigure[]{%
\label{fig:results}
\includegraphics[width=0.3\textwidth]{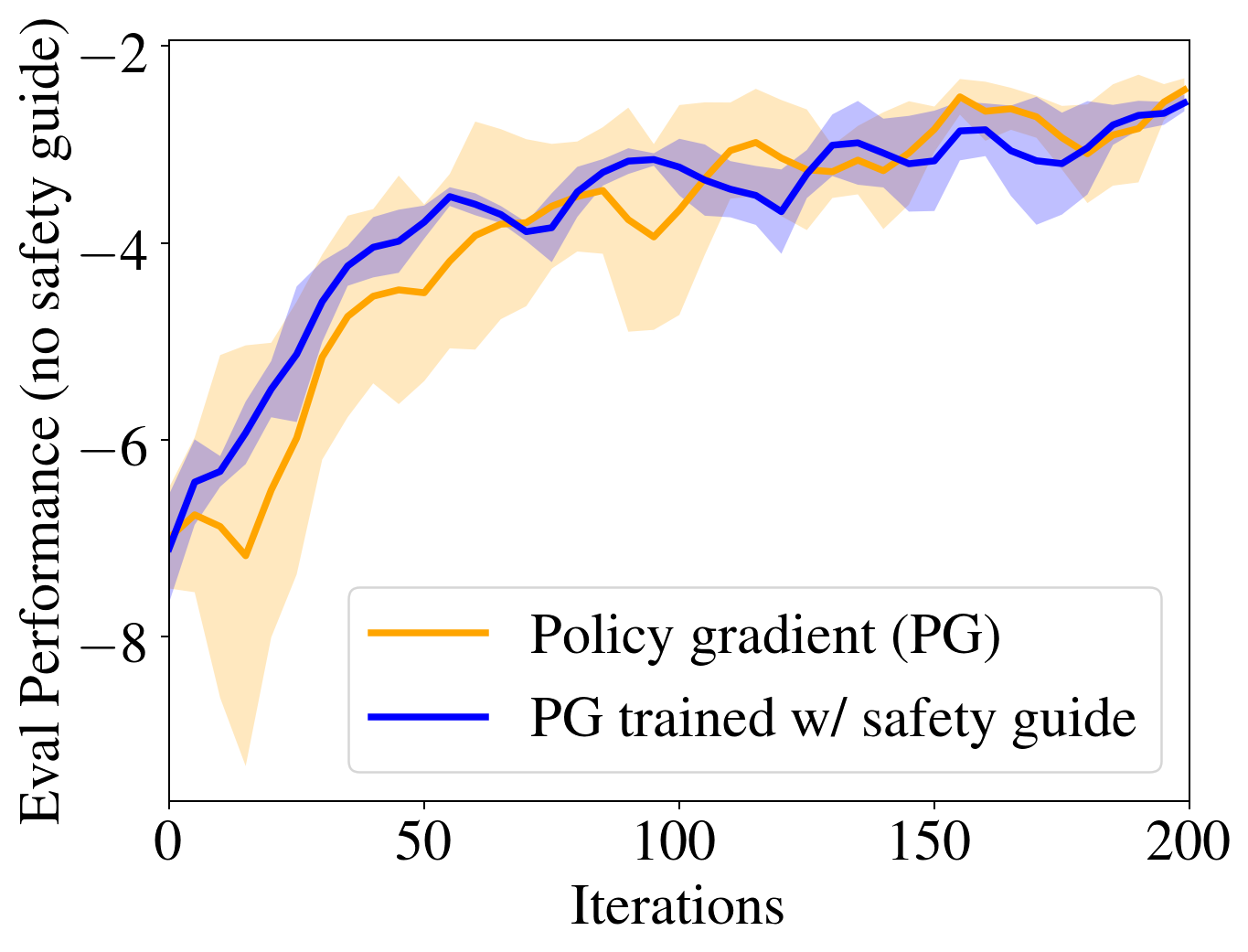}
}
\quad % space out the images a bit
\subfigure[]{
\label{fig:length}
\includegraphics[width=0.3\textwidth]{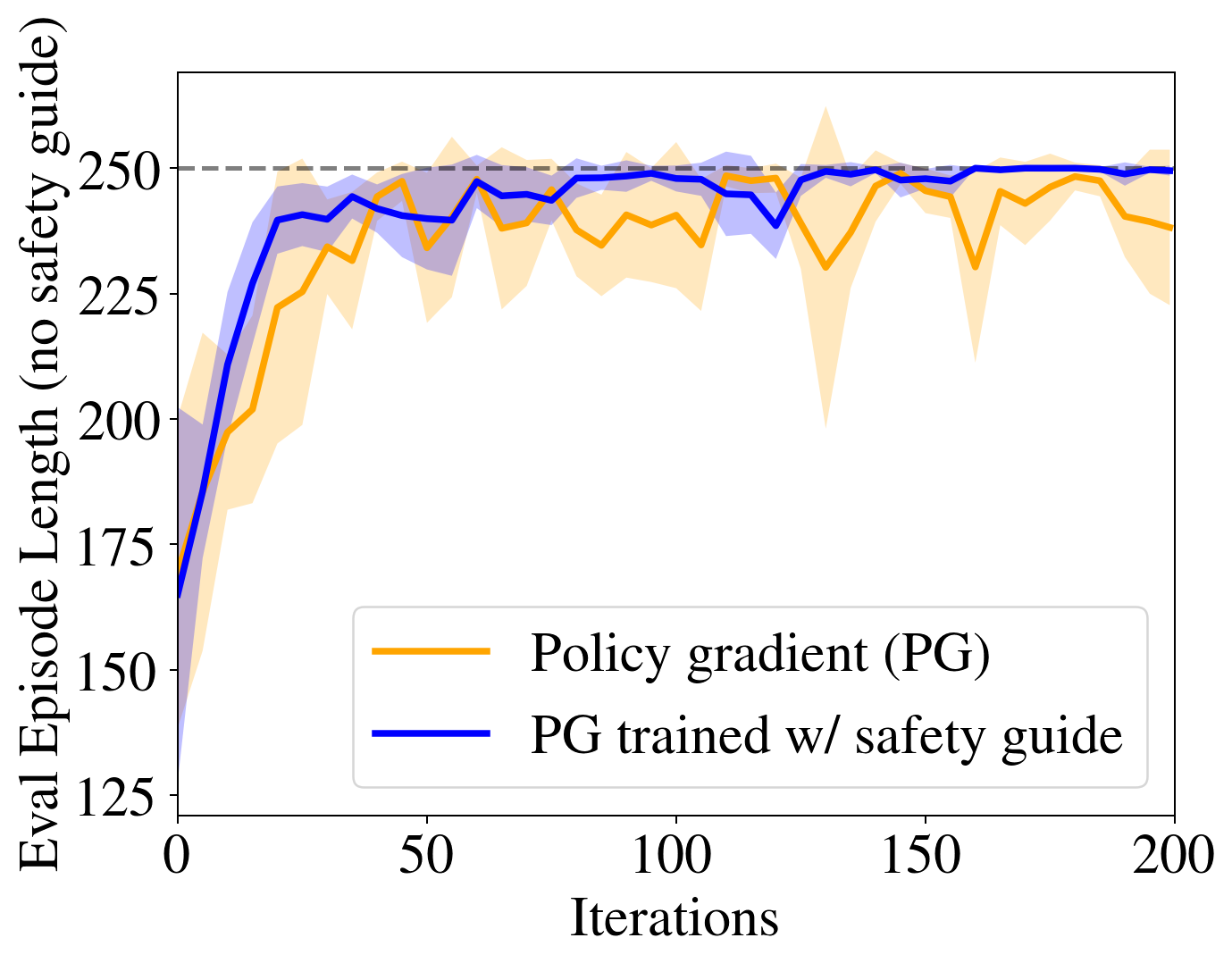}
}
}
\vspace{-0.1cm}
\end{figure}

Consider a two-dimensional quadrotor with state $\st = [x_t, \dot x_t, y_t, \dot y_t, \phi_t, \dot \phi_t]$, where $(x,y)$ is the quadrotor position and $\phi$ is the counter-clockwise angle to the vertical. The episode terminates if the quadrotor hits the ground or tilts more than $0.5$ radians. For early termination, the reward penalizes impact speed for hitting the ground ($r(\st) = -1 - 2 |\dot y_t|$) or rotational speed for excessive tilt ($r(\st) = -1 - 5 |\dot \phi_t|$). Otherwise, the quadrotor is incentivized to hover close to the ground while remaining centered horizontally ($r(\st) = -0.01 y_t - 0.01 |x_t|$). The control inputs are $\ac = [f_t, \tau_t]$, with $f_t \in [-2, 2]$ the vertical thrust and $\tau_t \in [-2, 2]$ the torque. Using the time step $\Delta t = 0.02$, we simulate the system using the following linearized dynamics about the hovering equilibrium
\[
    \begin{bmatrix}
        x_{t+1} \\ \dot x_{t+1} \\
        y_{t+1} \\ \dot y_{t+1} \\
        \phi_{t+1} \\ \dot \phi_{t+1}
    \end{bmatrix} =
    \begin{bmatrix}
        1 & 0 & 0 & \Delta t & 0 & 0 \\
        0 & 1 & 0 & 0 & \Delta t & 0 \\
        0 & 0 & 1 & 0 & 0 & \Delta t \\
        0 & 0 & -g \Delta t & 1 & 0 & 0 \\
        0 & 0 & 0 & 0 & 1 & 0 \\
        0 & 0 & 0 & 0 & 0 & 1 \\
    \end{bmatrix}
    \begin{bmatrix}
        x_t \\ \dot x_t \\
        y_t \\ \dot y_t \\
        \phi_t \\ \dot \phi_t
    \end{bmatrix}
    +
    \begin{bmatrix}
        0 & 0\\
        0 & 0\\
        0 & 0\\
        0 & 0\\
        \Delta t / m & 0\\
        0 & \Delta t / I\\
    \end{bmatrix}
    \begin{bmatrix}
    f_t \\ \tau_t
    \end{bmatrix}
\]
% Had to manually hyperref because jmlr subfig is broken
for mass $m=1$, inertia $I=1$, and gravity $g=1$. We design our safety set $\stsafe$ to have the constraints $y \geq 0.05$ and $-0.45 \leq \phi \leq 0.45$. Our terminal safe set $\stsafeterm$ consists of the same position bounds as well as a position-dependent velocity bound that captures the maximum velocity that can be brought to zero by the end of the corresponding safe set interval. Since this curve scales with the square root of distance, we inner approximate this by a polytope (Figure~\hyperref[fig:example]{2a}). The safety tolerance, planning horizon, and safety penalty are set as $\epsilon = 0.01$, $H = 15$, and $\beta = 1.5$. Our network consists of two hidden layers of size $64$ with $\tanh$ nonlinearities. We collected $5000$ steps per batch with an episode length of $250$ steps. The learning rate is $0.002$ and discount factor is $\gamma = 0.95$. Our reported results include the top $5$ of $10$ seeds by average eval performance---a common approach for mitigating policy initialization variance \citep{wu2017scalable}. The safety guide optimization problem is solved using MOSEK \citep{mosek}.

Our training approach achieved perfect safety over a training corpus of a million steps without compromising performance (Figure~\hyperref[fig:example]{2b}). Furthermore, Figure~\hyperref[fig:example]{2c} shows that safety guide-trained policy rapidly achieves the optimal average episode length of $250$ steps even when the safety guide is removed. This suggests that the safety penalty effectively induces the base policy to behave safely without having to try unsafe actions.
\section{Conclusion}
This work addresses the challenge of safe RL using a novel approach that combines a policy gradient agent with a chance-constrained MPC safety guide. The safety guide receives as input the proposed action distribution from the base policy and imposes additional safety requirements. By design, the safety guide intervenes minimally and modifies the base policy's proposed action distribution only if it inevitably leads towards an unsafe region of the state space. An additional safety penalty on these corrections in the overall objective allows us to provide theoretical guarantees that our base policy learns to behave safely without having to explore unsafe actions. We empirically justify our proposed method through numerical experiments on a linearized quadrotor control task.

\bibliography{main.bib}

\end{document}